\pgfplotsset{compat=1.8}
\newtheorem{theorem}{Theorem}
\theoremstyle{remark}
\renewcommand{\today}{\begingroup
\number \day\space  \ifcase \month \or January\or February\or
March\or April\or May\or June\or July\or August\or September\or
October\or November\or December\fi \space  \number \year \endgroup}
\theoremstyle{plain}
\newtheorem{teor*}{Teorema}
\theoremstyle{definition}
\title{Explicit agreement extremes for a $2\times2$ table\\ with given marginals}
\author{Jos\'e E. Chac\'on\footnote{Departamento de
Matem\'aticas, Universidad de Extremadura, E-06006 Badajoz, Spain. E-mail:
{\tt jechacon@unex.es}}}
\begin{document}

\maketitle

\begin{abstract}
\noindent The problem of maximizing (or minimizing) the agreement between clusterings, subject to given marginals, can be formally posed under a common framework for several agreement measures. Until now, it was possible to find its solution only through numerical algorithms. Here, an explicit solution is shown for the case where the two clusterings have two clusters each.
\end{abstract}


\newpage

\section{Introduction}

Given two different clusterings of a data set, many measures have been proposed to quantify their degree of concordance. A recent review of a representative number of them can be found in \cite{M16}. These measures are usually categorized into three classes: those based on inspecting the assignments of data pairs in both clusterings, those involving some cluster matching between the two clusterings, and those relying on information theoretic criteria. This paper concerns the first one of these classes. In fact, some of the most popular and widely used similarity measures, such as the Rand index, the Jaccard index, or the Fowlkes-Mallows index, belong to this class of pair-based similarities, but it should be noted that there is a plethora of them, as explored in \cite{ANM06}, \cite{W08} or \cite{WV19}.

Precisely, when studying the Rand index \cite{MA84} noted that this statistic does not take into account the possibility of agreement by chance. \cite{HA85} suggested a general formulation to correct any of these indices for chance, which consists in substracting from the index its expected value when the clustering labels are assigned at random (with the constraint that the number of clusters and their sizes are fixed), followed by a normalization that ensures that the resulting corrected index still attains a value of 1 for identical clusterings. Namely, the adjusted version of any index is given by
$$\frac{\text{index}-E[\text{index}]}{\text{maximum index}-E[\text{index}]},$$
where ``maximum index'' is usually taken to be equal to 1, since that is the most general upper bound for these indices.

However, \cite{HA85} also noted that another (perhaps more adequate) bound that could be used is the maximum of the index given the fixed marginals (i.e., the cluster sizes of each of the two clusterings). Unfortunately, they refer to the problem of finding the maximum index value subject to the marginal constraint as ``a very difficult problem of combinatorial optimization.'' Nevertheless, some progress has been made since then. \cite{M92} pointed out that for some of the pair-based similarities, the problem is equivalent to finding the maximum of the sum of the squares of the contingency table. And \cite{BS08}, and later \cite{SHB15}, proposed a binary integer program and a heuristic algorithm, respectively, to obtain an exact solution numerically.

Here, on the contrary, the focus is on finding explicit expressions for the confusion matrix configurations that maximize the agreement between two clusterings, given the marginals. Due to the aforementioned difficulty of the problem, this study is restricted to the simplest case of $2\times2$ contingency tables. Even if the problem is not solved in its greatest generality, it should be hoped that the explicit form of the solution shown here for this case may inspire further research on the topic and serve as a first step towards a possible solution for clusterings of arbitrary size in the future. Moreover, any of the above similarity measures can be readily transformed into a semimetric by substracting it from 1 (see \citealp{Ch19}, for a detailed study of the semimetric thus obtained from the adjusted Rand index), and then it is also of interest to find its maximum value or, equivalently, the minimum possible agreement according to the index. That is, to discover the contingency table configuration, with given marginals, corresponding to the two most disparate clusterings.

Section \ref{sec:2} introduces the necessary notation to state the problem, and also includes the main result that shows its explicit solution. The extra advantage of having an explicit form for the solution over the numerical one is that it allows gaining intuition to understand the problem more deeply, as shown in the examples in Section \ref{sec:3}. Finally, some hints for the case of arbitrary clusterings are given in Section \ref{sec:4}.

\section{Problem statement and solution}\label{sec:2}

Given two clusterings $\mathscr C=\{C_1,\dots,C_r\}$ and $\mathscr D=\{D_1,\dots,D_s\}$ of a data set $\mathcal X=\{x_1,\dots,x_n\}$, let us denote by $n_{ij}=|C_i\cap D_j|$ the number of observations that are assigned to cluster $C_i$ in $\mathscr C$ and to cluster $D_j$ in $\mathscr D$. All the information about the concordance between $\mathscr C$ and $\mathscr D$ is collected in the $r\times s$ confusion matrix $\mathbf N=(n_{ij})$, also known as contingency table. The row and column marginals of $\mathbf N$ are determined, respectively, by the vectors $(n_{1+},\dots,n_{r+})$ and $(n_{+1},\dots,n_{+s})$, where $n_{i+}=\sum_{j=1}^sn_{ij}=|C_i|$ and $n_{+j}=\sum_{i=1}^rn_{ij}=|D_j|$.

For any pair of observations $x_k$, $x_\ell$ with $k\neq\ell$, there are four possibilities regarding their group assignments according to $\mathscr C$ and $\mathscr D$: a) they are in the same cluster both in $\mathscr C$ and $\mathscr D$, b) they are in the same cluster in $\mathscr C$ but in different clusters in $\mathscr D$, c) they are in different clusters in $\mathscr C$ but in the same cluster in $\mathscr D$, and d) they are in different clusters both in $\mathscr C$ and $\mathscr D$. Following \cite{BS08}, the number of observations in each of these classes can be computed from $\mathbf N$, respectively, as follows:
\begin{align*}
a&=\frac{\big(\sum_{i=1}^r\sum_{j=1}^sn_{ij}^2\big)-n}2,\\
b&=\frac{\sum_{i=1}^rn_{i+}^2-\sum_{i=1}^r\sum_{j=1}^sn_{ij}^2}2,\\
c&=\frac{\sum_{j=1}^sn_{+j}^2-\sum_{i=1}^r\sum_{j=1}^sn_{ij}^2}2,\\
d&=\frac{\big(\sum_{i=1}^r\sum_{j=1}^sn_{ij}^2\big)+n^2-\sum_{i=1}^rn_{i+}^2-\sum_{j=1}^sn_{+j}^2}2.
\end{align*}

Many popular agreement indices are defined in terms of $a,b,c,d$, as for instance, the Rand index, the adjusted Rand index, the Jaccard index or the Fowlkes-Mallows index \cite[see, e.g.,][for details]{SHB15}. Moreover, \cite{M92} noted that, if the marginals are given and fixed, then all these quantities depend only on $Q=\sum_{i=1}^r\sum_{j=1}^sn_{ij}^2$, in a way such that $a$ and $d$ are maximized and $b$ and $c$ are minimized when $Q$ is maximized. As a consequence, \cite{BS08} noted that the problem of finding the extrema of the aforementioned indices, and also many others included in \cite{ANM06}, reduces to that of finding the extrema of $Q$, subject to the given marginals.


This paper deals with a simplified version of the problem, namely the case $r=s=2$. In such a context, the class $\mathcal N(x,y,n)$ of possible contingency tables given the marginals $n_{1+}=x$ and $n_{+1}=y$ becomes uniparametric, with entries and marginals
\begin{table}[h!t]
\centering
\begin{tabular}{c|c|c||c}
&$D_1$&$D_2$&Total\\\hline
$C_1$ &$k$  &$x-k$&$x$\\\hline
$C_2$ &$y-k$&$n+k-x-y$&$n-x$\\\hline\hline
Total&$y$&$n-y$&$n$
\end{tabular}
\end{table}

\noindent The only free parameter in the previous table is $k$, which must be a non-negative integer. Moreover, all the entries in the table must be non-negative, resulting in the condition
\begin{equation}\label{eq:klim}
\max\{0,x+y-n\}\leq k\leq\min\{x,y\}.
\end{equation}
Thus, the problem reduces to finding the extrema of $Q$ over $\mathcal N(x,y,n)$; that is, finding the minimizer(s) and maximizer(s) of
\begin{equation}\label{eq:Qk}
Q(k)=k^2+(x-k)^2+(y-k)^2+(n+k-x-y)^2
\end{equation}
subject to (\ref{eq:klim}).

It is quite useful to consider some reductions that greatly simplify the problem. Notice that the codification of the cluster labels is arbitrary, for in cluster analysis what matters is the group members, and not the group denomination. This means that we can assume that $x\geq n-x$ since, if that is not the case, it suffices to interchange the labels of clusters $C_1$ and $C_2$. Similarly, we can also assume that $y\geq n-y$. Finally, by interchanging the clusterings $\mathscr C$ and $\mathscr D$, if necessary, it is possible to assume that $x\leq y$. The three reductions can be summarized in the condition $n/2\leq x\leq y$ and, under such a condition, the range (\ref{eq:klim}) of possible values of $k$ simplifies to
\begin{equation}\label{eq:klim2}
x+y-n\leq k\leq x.
\end{equation}
Notice also that, in order to have two clusters in each clustering (i.e., to avoid the possibility of a degenerate, empty cluster), it must be $\max\{x,y\}<n$. With this background, we are ready to state our main result.

Denote by $\lfloor z\rfloor$ and $\lceil z\rceil$ the floor and ceiling of a real number $z$, respectively; that is, $\lfloor z\rfloor$ is the greatest integer less than or equal to $z$, and $\lceil z\rceil$ is the least integer greater than or equal to $z$. Similarly, denote by $\{z\}=z-\lfloor z\rfloor$ the fractional (or decimal) part of $z$. With this notation, the closest integer to $z$ is $\lfloor z+1/2\rfloor$, assuming a round-up tie-breaking rule for those numbers with $\{z\}=1/2$.

\begin{theorem}\label{thm}
The maximum and minimum values of $Q(k)$, given $n$, $n_{1+}=x$ and $n_{+1}=y$, with $n/2\leq x\leq y$, are attained as follows:
\begin{itemize}
\item[a)] If $x>n/2$, the maximum is attained for $k=x$. If $x=n/2$, the maximum is attained both for $k=x+y-n$ and for $k=x$.
\item[b)] If $x+y>3n/2$, the minimum is attained for $k=x+y-n$. If $x+y\leq 3n/2$, the minimum is attained for $k=\lfloor v+1/2\rfloor$ if $\{v\}\neq1/2$ and for both $k=\lfloor v\rfloor$ and $k=\lceil v\rceil$ if $\{v\}=1/2$, where $v=(2x+2y-n)/4$.
\end{itemize}
\end{theorem}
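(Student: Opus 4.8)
The plan is to exploit the fact that $Q(k)$ in \eqref{eq:Qk} is a quadratic in $k$, and in fact a parabola opening upwards, so that both extreme problems reduce to locating its vertex relative to the feasible range \eqref{eq:klim2}. First I would expand \eqref{eq:Qk}, collect powers of $k$, and complete the square to obtain a representation of the form
\begin{equation*}
Q(k)=4(k-v)^2+C,\qquad v=\frac{2x+2y-n}{4},
\end{equation*}
where $C$ does not depend on $k$. Indeed, the coefficient of $k^2$ is $4>0$ and the coefficient of $k$ is $2n-4x-4y$, whose associated vertex is precisely $v$. This single identity is the engine of the whole argument: since $Q(k)$ is a convex, increasing function of the distance $|k-v|$, maximizing $Q$ means moving as far as possible from $v$, while minimizing $Q$ means getting as close as possible to $v$.

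For part (a), the maximum of a convex function over a closed range is attained at one of the endpoints of \eqref{eq:klim2}, namely $k=x+y-n$ or $k=x$, both of which are integers. I would therefore compare the two endpoint values directly. Using the completed-square form and factoring the resulting difference of squares,
\begin{equation*}
Q(x)-Q(x+y-n)=4\bigl[(x-v)^2-(x+y-n-v)^2\bigr]=4\,(n-y)\Bigl(x-\tfrac n2\Bigr),
\end{equation*}
after substituting $2v=x+y-n/2$. Since $y<n$ guarantees $n-y>0$, the sign of this difference equals the sign of $x-n/2$. Hence $Q(x)>Q(x+y-n)$ when $x>n/2$ (maximum at $k=x$) and $Q(x)=Q(x+y-n)$ when $x=n/2$ (maximum at both endpoints), which is the claimed dichotomy.

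For part (b), the minimum is governed by whether the vertex $v$ lies inside the feasible range. I would first check the two inequalities $x+y-n\le v\le x$. The right inequality $v\le x$ is equivalent to $y\le x+n/2$, which always holds because $x\ge n/2$ and $y<n$; the left inequality $v\ge x+y-n$ is equivalent to $x+y\le 3n/2$. Thus, if $x+y>3n/2$, the vertex falls to the left of the range, $Q$ is increasing throughout \eqref{eq:klim2}, and the minimum is attained at the left endpoint $k=x+y-n$. If instead $x+y\le 3n/2$, then $v$ lies in $[x+y-n,x]$, so the real minimizer is $k=v$; because $Q(k)$ depends on $k$ only through $(k-v)^2$, the integer minimizer is the integer nearest to $v$, that is $k=\lfloor v+1/2\rfloor$, with the two equidistant neighbours $\lfloor v\rfloor$ and $\lceil v\rceil$ both minimizing $Q$ in the tie case $\{v\}=1/2$.

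The routine expansions are not where the difficulty lies; the points that need genuine care are all in part (b). I must confirm that the nearest integer to $v$ actually belongs to the feasible range — which follows because $v$ lies in $[x+y-n,x]$ and these endpoints are \emph{integers}, so that both $\lfloor v\rfloor$ and $\lceil v\rceil$ remain in \eqref{eq:klim2} — and I must handle the tie-breaking at $\{v\}=1/2$ as well as the consistency of the two formulas on the boundary $x+y=3n/2$, where $v=n/2=x+y-n$, so that both descriptions agree. Once the completed-square identity is established, these are bookkeeping matters rather than substantive obstacles.
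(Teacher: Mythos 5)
Your proposal is correct and follows essentially the same route as the paper: write $Q$ as a convex parabola with vertex at $v=(2x+2y-n)/4$, locate $v$ relative to the feasible range $[x+y-n,x]$, and take the nearest feasible integer for the minimum and an endpoint for the maximum. The only (minor, and arguably cleaner) difference is in part (a), where you settle the maximum by directly factoring $Q(x)-Q(x+y-n)=4(n-y)(x-n/2)$, whereas the paper compares $v$ with the midpoint of the range after splitting into the two vertex-location cases; both arguments are valid and your explicit check that the nearest integer to $v$ stays feasible is a point the paper leaves implicit.
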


\begin{proof}
The function $Q(k)$ in (\ref{eq:Qk}) is quadratic in $k$. In fact, it can be alternatively expressed as
$$Q(k)=4k^2-2(2x+2y-n)k+(n-x)^2+(n-y)^2+(x+y)^2-n^2,$$
so it is a convex parabola with minimum at $v=(2x+2y-n)/4$. The different cases correspond to the location of $v$ with respect to the lower and upper bounds for $k$ in (\ref{eq:klim2}). It can never be $v\geq x$, since that would entail $y\geq x+n/2\geq n$, which is not possible. So only two possibilities need to be studied: either $v<x+y-n$ or $x+y-n\leq v< x$.

Condition $v<x+y-n$ is equivalent to $x+y>3n/2$. In that case, since $Q(k)$ is a convex parabola with minimum to the left of the possible range of $k$ values, its minimum over (\ref{eq:klim2}) is attained for the leftmost feasible value of $k$, that is $k=x+y-n$, and its maximum for the rightmost one, $k=x$.

When $x+y-n\leq v< x$, which is equivalent to $x+y\leq 3n/2$ (under the remaining conditions), the maximum over (\ref{eq:klim2}) could be attained either for the leftmost or the rightmost range value, depending on the position of $v$ with respect to the midpoint of the range (take into account that $Q(k)$ is symmetric with respect to $v$). But $x\geq n/2$ implies that $v\leq (2x+y-n)/2$, so $v$ is always less than or equal than the midpoint of the of range of $k$ values. If the inequality is strict, then the maximum of $Q(k)$ is attained for the rightmost value $k=x$, and if $x=n/2$ the parabola $Q(k)$ attains the same maximum value for the two range bounds. 

Regarding the minimum, since in this case $v$ is sandwiched between two integer values, the minimum of $Q(k)$ over (\ref{eq:klim2}) is attained for the integer that is closest to $v$, or for the two closest integers when $\{v\}=1/2$, as announced in the statement of the theorem.
\end{proof}

\section{Examples}\label{sec:3}


It is instructive to visualize the configuration of the confusion matrices for which maximum and minimum agreement is attained, as learned from Theorem \ref{thm}. Recall that our problem reductions imply that $n/2\leq x\leq y$.

Under the given conditions, the maximum agreement between clusterings is always attained for $k=x$, that is, for the confusion matrix
$$\begin{pmatrix}x&0\\y-x&n-y\end{pmatrix}.$$
This corresponds to a situation where the biggest cluster of $\mathscr C$ is completely contained in the biggest cluster of $\mathscr D$, so that the smallest cluster of $\mathscr D$ only contains elements from the smaller cluster of $\mathscr D$, as depicted in Figure \ref{fig:max}. When $x=n/2$, the two clusters in $\mathscr C$ have the same size, so the maximum agreement is attained when the biggest cluster of $\mathscr D$ completely contains either $C_1$ or $C_2$. 

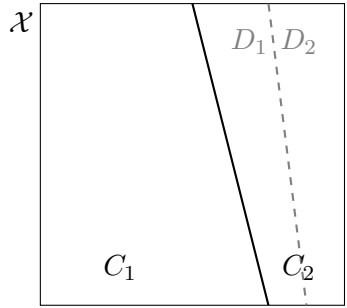
\begin{figure}[t]\centering
\begin{tikzpicture}[scale=0.5]
    \draw (-4,-4) -- (4,-4) -- (4,4) -- (-4,4) -- cycle;
    \draw [thick] (0,4) -- (2,-4);
    \draw [color=gray,thick,dashed] (2,4) -- (3,-4);
    \draw[color=gray] (1.5,3) node {$D_1$};
    \draw (-1.9,-3) node {$C_1$};
    \draw[color=gray] (2.8,3) node[] {$D_2$};
    \draw (2.8,-3) node {$C_2$};
    \draw (-4.5,3.5) node {$\mathcal X$};
\end{tikzpicture}
\caption{A configuration with maximum agreement. The square represents the whole data set $\mathcal X$. Clustering $\mathscr C$ has boundary and cluster tags in black, while clustering $\mathscr D$ has them in grey.} \label{fig:max}
\end{figure}

On the other hand, the situation of minimum agreement is not so straighforward to describe. The condition $x+y>3n/2$ is equivalent to $y>(n-x)+n/2$, so that the biggest cluster in $\mathscr D$ is big enough to contain the smallest cluster in $\mathscr C$ plus a considerable number of observations from the other cluster in $\mathscr C$ (more than half the total sample size). Then, the minimum agreement is attained for the confusion matrix
$$\begin{pmatrix}x+y-n&n-y\\n-x&0\end{pmatrix}.$$
This means that the smallest cluster in $\mathscr C$ is completely contained in the biggest cluster of $\mathscr D$, which thus contains the most heterogeneous possible mixture from members of the two clusters of $\mathscr C$. This is represented graphically in Figure \ref{fig:min1}. To describe the situation for $x+y\leq3n/2$, let us assume for simplicity that $v=(2x+2y-n)/4$ is a integer number, so that the minimum is precisely attained for $k=v$. The confusion matrix for this case is
$$\begin{pmatrix}(2x+2y-n)/4&(n+2x-2y)/4\\(n-2x+2y)/4&(3n-2x-2y)/4\end{pmatrix},$$
but it does not seem easy to find an intuitive description for this situation.

\begin{figure}[t]\centering
\begin{tikzpicture}[scale=0.5]
    \draw (-4,-4) -- (4,-4) -- (4,4) -- (-4,4) -- cycle;
    \draw [thick] (0,4) -- (2,-4);
    \draw [color=gray,thick,dashed] (-2.8,4) -- (-1.8,-4);
    \draw[color=gray] (-2,3) node {$D_1$};
    \draw (-1.9,-3) node {$C_1$};
    \draw[color=gray] (-3.4,3) node[] {$D_2$};
    \draw (2.8,-3) node {$C_2$};
    \draw (-4.5,3.5) node {$\mathcal X$};
\end{tikzpicture}
\caption{A configuration with minimum agreement. The square represents the whole data set $\mathcal X$. Clustering $\mathscr C$ has boundary and cluster tags in black, while clustering $\mathscr D$ has them in grey.} \label{fig:min1}
\end{figure}
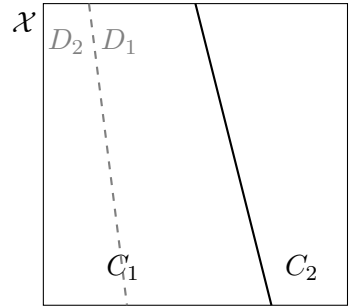

\section{Conclusions}\label{sec:4}

This paper provides an explicit solution for the maximization and minimization of a certain class of agreement measures between two clusterings, given the sizes of their clusters. This problem was posed 35 years ago by \cite{HA85}, and until now it was possible to solve it only through numerical algorithms \citep{SHB15}.

Here, the focus is on the simplest case where each of the two clusterings has only two clusters. Although unfortunately an explicit solution is not yet available in its greatest generality, it should be hoped that our revelation of the explicit forms of the configurations attaining the agreement extremes could be inspiring to tackle the problem with clusterings of arbitrary size. 

For instance, after additional inspection of the results from exhaustive computation of all possible $3\times 3$ confusion matrices with some given marginals, it appears that, to achieve maximum agreement, it is necessary that the biggest cluster in one of the clusterings is completely contained in some cluster in the other clustering.

\bibliographystyle{apalike}

\end{document}